\newcommand{\inspace}{\ensuremath{\mathcal{X}}}   
\newcommand{\pp}[1]{\ensuremath{\mathbb{#1}}} 
\newcommand{\pspace}{\ensuremath{\mathfrak{P}_{\inspace}}}    
\newcommand{\hbspace}{\ensuremath{\mathcal{H}}}    
\newcommand{\abbrvmm}[1]{\ensuremath{\mu_{#1}}}
\newcommand{\dd}{\, \mathrm{d}} 
\newtheorem{definition}{Definition}
\newtheorem{theorem}[definition]{Theorem}
\providecommand{\abs}[1]{\lvert#1\rvert}  
\providecommand{\norm}[1]{\lVert#1\rVert}
\title{One-Class Support Measure Machines for Group Anomaly Detection}
\author{ {\bf Krikamol Muandet} \\
  Empirical Inference Department \\
  Max Planck Institute for Intelligent Systems \\
  Spemannstra{\ss}e 38, 72076 T\"{u}bingen \\
  \And {\bf Bernhard Sch\"{o}lkopf} \\
  Empirical Inference Department \\
  Max Planck Institute for Intelligent Systems \\
  Spemannstra{\ss}e 38, 72076 T\"{u}bingen
}
\begin{document}
\maketitle

\begin{abstract}
  We propose one-class support measure machines (OCSMMs) for group anomaly detection. Unlike traditional anomaly detection, OCSMMs aim at recognizing anomalous aggregate behaviors of data points. The OCSMMs generalize well-known one-class support vector machines (OCSVMs) to a space of probability measures. By formulating the problem as quantile estimation on distributions, we can establish interesting connections to the OCSVMs and variable kernel density estimators (VKDEs) over the input space on which the distributions are defined, bridging the gap between large-margin methods and kernel density estimators. In particular, we show that various types of VKDEs can be considered as solutions to a class of regularization problems studied in this paper. Experiments on Sloan Digital Sky Survey dataset and High Energy Particle Physics dataset demonstrate the benefits of the proposed framework in real-world applications.
\end{abstract}

\section{Introduction}
\label{sec:introduction}

Anomaly detection is one of the most important tools in all data-driven scientific disciplines. Data that do not conform to the expected behaviors often bear some interesting characteristics and can help domain experts better understand the problem at hand. However, in the era of data explosion, the anomaly may appear not only in the data themselves, but also as a result of their interactions. The main objective of this paper is to investigate the latter type of anomalies. To be consistent with the previous works \citep{Poczos11:Divergence,Xiong11:HPM,Liang11:FGM}, we will refer to this problem as a group anomaly detection, as opposed to a traditional point anomaly detection.

Like traditional point anomaly detection, the group anomaly detection refers to a problem of finding patterns in groups of data that do not conform to expected behaviors \citep{Poczos11:Divergence,Xiong11:HPM,Liang11:FGM}. That is, an ultimate goal is to detect interesting aggregate behaviors of data points among several groups. In principle, anomalous groups may consist of individually anomalous points, which are relatively easy to detect. On the other hand, anomalous groups of relatively normal points, whose behavior as a group is unusual, is much more difficult to detect. In this work, we are interested in the latter type of group anomalies. Figure \ref{fig:group_anomaly} illustrates this scenario.

\begin{figure}[t!] 
  \centering
  \includegraphics[width=\linewidth]{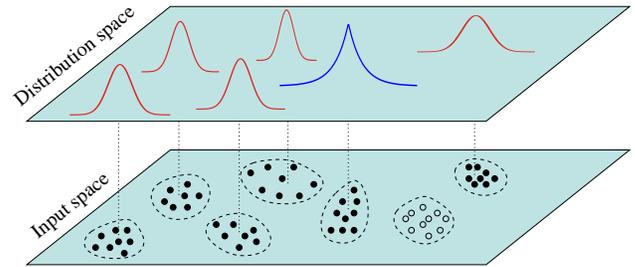}
  \caption{An illustration of two types of group anomalies. An anomalous group may be a group of anomalous samples which is easy to detect (unfilled points). In this paper, we are interested in detecting anomalous groups of normal samples (filled points) which is more difficult to detect because of the higher-order statistics. Note that group anomaly we are interested in can only be observed in the space of distributions.}
  \label{fig:group_anomaly}
\end{figure}

Group anomaly detection may shed light in a wide range of applications. For example, a Sloan Digital Sky Survey (SDSS) has produced a tremendous amount of astronomical data. It is therefore very crucial to detect rare objects such as stars, galaxies, or quasars that might lead to a scientific discovery. In addition to individual celestial objects, investigating groups of them may help astronomers understand the universe on larger scales. For instance, the anomalous group of galaxies, which is the smallest aggregates of galaxies, may reveal interesting phenomena, e.g., the gravitational interactions of galaxies.

Likewise, a new physical phenomena in high energy particle physics such as Higgs boson appear as a tiny excesses of certain types of collision events among a vast background of known physics in particle detectors \citep{Bhat10:PP,Vatanan12:SSCA}. Investigating each collision event individually is no longer sufficient as the individual events may not be anomalies by themselves, but their occurrence together as a group is anomalous. Hence, we need a powerful algorithm to detect such a rare and highly structured anomaly.

Lastly, the algorithm proposed in this paper can be applied to point anomaly detection with substantial and heterogeneous uncertainties. For example, it is often costly and time-consuming to obtain the full spectra of astronomical objects. Instead, relatively noisier measurements are usually made. In addition, the estimated uncertainty which represents the uncertainty one would obtain from multiple observations is also available. Incorporating these uncertainties has been shown to improve the performance of the learning systems \citep{Kirkpatrick11:Likelihood,Bovy11:XDQSO,Ross12:DR9QTS}.

The anomaly detection has been intensively studied (\citet{Chandola09:AnomalySurvey} and references therein). However, few attempts have been made on developing successful group anomaly detection algorithms. For example, a straightforward approach is to define a set of features for each group and apply standard point anomaly detection \citep{Chan05:TimeSeries}. Despite its simplicity, this approach requires a specific domain knowledge to construct appropriate sets of features. Another possibility is to first identify the individually anomalous points and then find their aggregations \citep{Das08:Anomaly}. Again, this approach relies only on the detection of anomalous points and thus cannot find the anomalous groups in which their members are perfectly normal. Successful group anomaly detectors should be able to incorporate the higher-order statistics of the groups.

Recently, a family of hierarchical probabilistic models based on a Latent Dirichlet Allocation (LDA) \citep{Blei03:LDA} has been proposed to cope with both types of group anomalies \citep{Xiong11:HPM,Liang11:FGM}. In these models, the data points in each group are assumed to be one of the $K$ different types and generated by a mixture of $K$ Gaussian distributions. Although the distributions over these $K$ types can vary across $M$ groups, they share common generator. The groups that have small probabilities under the model are marked as anomalies using scoring criteria defined as a combination of a point-based anomaly score and a group-based anomaly score. The Flexible Genre Model (FGM) recently extends this idea to model more complex group structures \citep{Liang11:FGM}.

Instead of employing a generative approach, we propose a simple and efficient discriminative way of detecting group anomaly. In this work, $M$ groups of data points are represented by a set of $M$ probability distributions assumed to be i.i.d. realization of some unknown distribution $\mathscr{P}$. In practice, only i.i.d samples from these distributions are observed. Hence, we can treat group anomaly detection as detecting the anomalous distributions based on their empirical samples. To allow for a practical algorithm, the distributions are mapped into the reproducing kernel Hilbert space (RKHS) using the kernel mean embedding. By working directly with the distributions, the higher-order information arising from the aggregate behaviors of the data points can be incorporated efficiently.


\section{Quantile Estimation on Probability Distributions} 
\label{sec:setting}

Let $\inspace$ denote a non-empty input space with associated $\sigma$-algebra $\mathcal{A}$, $\pp{P}$ denote the probability distribution on $(\inspace,\mathcal{A})$, and $\mathfrak{P}_{\inspace}$ denote the set of all probability distributions on $(\inspace,\mathcal{A})$. The space $\mathfrak{P}_{\inspace}$ is endowed with the topology of weak convergence and the associated Borel $\sigma$-algebra. 


We assume that there exists a distribution $\mathscr{P}$ on $\mathfrak{P}_{\inspace}$, where $\pp{P}_1,\dotsc,\pp{P}_{\ell}$ are i.i.d. realizations from $\mathscr{P}$, and the sample $S_i$ is made of $n_i$ i.i.d. samples distributed according to the distribution $\pp{P}_i$. In this work, we observe $\ell$ samples $S_i=\{x^{(i)}_k\}_{1\leq k\leq n_i}$ for $i=1,\dotsc,\ell$. For each sample $S_i$, $\widehat{\pp{P}}_i=\frac{1}{n_i}\sum_{j=1}^{n_i}\delta_{x_j^{(i)}}$ is the associated empirical distribution of $\pp{P}_i$.

In this work, we formulate a group anomaly detection problem as learning quantile function $q:\pspace\rightarrow\mathbb{R}$ to estimate the support of $\mathscr{P}$. Let $\mathcal{C}$ be a class of measurable subsets of $\pspace$ and $\lambda$ be a real-valued function defined on $\mathcal{C}$, the quantile function w.r.t. $(\mathscr{P},\mathcal{C},\lambda)$ is
\begin{equation*}
  q(\beta) = \inf\{\lambda(C): \mathscr{P}(C)\geq\beta, C\in\mathcal{C}\} \enspace, 
\end{equation*}
\noindent where $0 < \beta \leq 1$. In this paper, we consider when $\lambda$ is Lebesgue measure, in which case $C(\beta)$ is the minimum volume $C\in\mathcal{C}$ that contains at least a fraction $\beta$ of the probability mass of $\mathscr{P}$. Thus, the function $q$ can be used to test if any test distribution $\pp{P}_t$ is anomalous w.r.t. the training distributions.

Rather than estimating $C(\beta)$ in the space of distributions directly, we first map the distributions into a feature space via a positive semi-definite kernel $k$. Our class $\mathcal{C}$ is then implicitly defined as the set of half-spaces in the feature space. Specifically, $C_{\mathbf{w}}=\{\pp{P} \; | \; f_{\mathbf{w}}(\pp{P}) \geq \rho\}$ where $(\mathbf{w},\rho)$ are respectively a weight vector and an offset parametrizing a hyperplane in the feature space associated with the kernel $k$. The optimal $(\mathbf{w},\rho)$ is obtained by minimizing a regularizer which controls the smoothness of the estimated function describing $C$.

\section{One-Class Support Measure Machines}
\label{sec:ocsmm}


In order to work with the probability distributions efficiently, we represent the distributions as mean functions in a reproducing kernel Hilbert space (RKHS) \citep{Berlinet04:RKHS, Smola07Hilbert}. Formally, let $\hbspace$ denote an RKHS of functions $f:\inspace\rightarrow\mathbb{R}$ with reproducing kernel $k:\inspace\times\inspace\rightarrow\mathbb{R}$. The kernel mean map from $\pspace$ into $\hbspace$ is defined as
\begin{equation}
\label{eq:meanmap}
\mu : \pspace \rightarrow \hbspace, \enspace \mathbb{P} \longmapsto \int_{\inspace}k(x,\cdot)\dd
\mathbb{P}(x) \enspace .
\end{equation}
We assume that $k(x,\cdot)$ is bounded for any $x\in\inspace$. For any $\mathbb{P}$, letting $\abbrvmm{\pp{P}}=\mu(\pp{P})$, one can show that $\mathbb{E}_{\mathbb{P}}[f]=\langle\abbrvmm{\mathbb{P}},f \rangle_{\hbspace}$, for all $f\in\hbspace$. 

The following theorem due to \citet{Fukumizu04:RKHS} and \citet{Sriperumbudur10:Metrics} gives a promising property of representing distributions as mean elements in the RKHS. 

\begin{theorem}
  \label{thm:characteristic}
  The kernel $k$ is characteristic if and only if the map \eqref{eq:meanmap} is injective.
\end{theorem}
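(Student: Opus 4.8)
The plan is to reduce the statement to the reproducing identity $\mathbb{E}_{\mathbb{P}}[f]=\langle\mu_{\mathbb{P}},f\rangle_{\hbspace}$ recorded just above, after which only an elementary Hilbert-space argument remains. First I would make explicit the definition of ``characteristic'' being used (following \citet{Fukumizu04:RKHS}): $k$ is characteristic when a probability measure on $(\inspace,\mathcal{A})$ is determined by the expectations it assigns to functions in $\hbspace$, i.e. $\mathbb{E}_{\mathbb{P}}[f]=\mathbb{E}_{\mathbb{Q}}[f]$ for all $f\in\hbspace$ forces $\mathbb{P}=\mathbb{Q}$. The theorem then asserts that this is the same as injectivity of the mean map $\mu$ in \eqref{eq:meanmap}, and I would prove the two implications separately.

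For the $(\Leftarrow)$ direction, assume $\mu$ is injective and $\mathbb{E}_{\mathbb{P}}[f]=\mathbb{E}_{\mathbb{Q}}[f]$ for every $f\in\hbspace$; by the reproducing identity this reads $\langle\mu_{\mathbb{P}}-\mu_{\mathbb{Q}},f\rangle_{\hbspace}=0$ for all $f$, and taking $f=\mu_{\mathbb{P}}-\mu_{\mathbb{Q}}$ yields $\mu_{\mathbb{P}}=\mu_{\mathbb{Q}}$, whence $\mathbb{P}=\mathbb{Q}$. For the $(\Rightarrow)$ direction, assume $k$ is characteristic and $\mu_{\mathbb{P}}=\mu_{\mathbb{Q}}$; then $\mathbb{E}_{\mathbb{P}}[f]=\langle\mu_{\mathbb{P}},f\rangle_{\hbspace}=\langle\mu_{\mathbb{Q}},f\rangle_{\hbspace}=\mathbb{E}_{\mathbb{Q}}[f]$ for all $f\in\hbspace$, so $\mathbb{P}=\mathbb{Q}$, i.e. $\mu$ is injective. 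Both halves become one line once the identity is in hand.

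Accordingly, the only part requiring genuine care — and what I would spend most of the write-up on — is justifying the ingredients of the identity itself: that $\mu$ really maps $\pspace$ into $\hbspace$, and that expectations can be exchanged with the Hilbert-space integral. The boundedness assumption on $k(x,\cdot)$ (equivalently $\sup_{x\in\inspace}k(x,x)<\infty$) together with measurability of $x\mapsto k(x,\cdot)$ makes this map Bochner-integrable against any $\mathbb{P}$, so $\mu_{\mathbb{P}}=\int_{\inspace}k(x,\cdot)\dd\mathbb{P}(x)$ is a well-defined element of $\hbspace$; continuity of the inner product then permits pulling $\langle\cdot,f\rangle_{\hbspace}$ inside the integral to obtain $\langle\mu_{\mathbb{P}},f\rangle_{\hbspace}=\int_{\inspace}f(x)\dd\mathbb{P}(x)=\mathbb{E}_{\mathbb{P}}[f]$. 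I would also remark that the equivalence is robust to how ``characteristic'' is phrased, since $\hbspace$ itself plays the role of the separating class and no density or universality statement about $\hbspace$ inside a larger function space is invoked. The main obstacle is thus purely measure-theoretic bookkeeping rather than anything in the logical core.
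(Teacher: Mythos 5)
The paper does not actually prove this statement --- it is quoted as a known result of \citet{Fukumizu04:RKHS} and \citet{Sriperumbudur10:Metrics} --- so there is no internal proof to compare against. Your argument is nonetheless correct and is the standard one: once ``characteristic'' is fixed to mean that expectations of functions in $\hbspace$ separate probability measures, the equivalence with injectivity of $\mu$ reduces to exactly the two one-line computations you give via the identity $\mathbb{E}_{\mathbb{P}}[f]=\langle\mu_{\mathbb{P}},f\rangle_{\hbspace}$, and you are right that the only place requiring care is establishing that identity (Bochner integrability of $x\mapsto k(x,\cdot)$ from boundedness, then continuity of the inner product to interchange it with the integral). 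Two caveats. First, the equivalence is sensitive to which definition of ``characteristic'' you start from: under the convention of \citet{Sriperumbudur10:Metrics} the theorem is true essentially by definition, whereas the ``probability-determining'' formulation you adopt is the one for which your argument is the honest proof; your closing remark that the equivalence is ``robust to how characteristic is phrased'' overreaches slightly, since phrasings via universality or denseness of $\hbspace$ in $L^2$ are genuinely stronger conditions not recovered by this argument. Second, the real mathematical content of the cited works lies in determining \emph{which} kernels are characteristic (Gaussian RBF, Laplacian, etc.), not in this equivalence, which --- as your write-up correctly reflects --- carries almost no logical weight on its own.
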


Examples of characteristic kernels include Gaussian RBF kernel and Laplace kernel. Using the characteristic kernel $k$, Theorem \ref{thm:characteristic} implies that the map \eqref{eq:meanmap} preserves all information about the distributions. Hence, one can apply many existing kernel-based learning algorithms to the distributions as if they are individual samples with no information loss.

Intuitively, one may view the mean embeddings of the distributions as their feature representations. Thus, our approach is in line with previous attempts in group anomaly detection that find a set of appropriate features for each group. On the one hand, however, the mean embedding approach captures all necessary information about the groups without relying heavily on a specific domain knowledge. On the other hand, it is flexible to choose the feature representation that is suitable to the problem at hand via the choice of the kernel $k$.

\subsection{OCSMM Formulation}

Using the mean embedding representation \eqref{eq:meanmap}, the primal optimization problem for one-class SMM can be subsequently formulated in an analogous way to the one-class SVM \citep{scholkopf01:OCSVM} as follow:
\begin{subequations} 
  \label{eq:ocsmm-primal}
  \begin{align}
    & \underset{\textbf{w},b,\xi,\rho}{\text{minimize}} & & 
    \frac{1}{2} \langle \textbf{w},\textbf{w} \rangle_{\hbspace} - \rho + \frac{1}{\nu\ell}\sum_{i=1}^{\ell}\xi_i \\
    & \text{subject to} & & \langle \textbf{w},\abbrvmm{\pp{P}_i}\rangle_{\hbspace} \geq \rho - \xi_i, \xi_i\geq 0
  \end{align}
\end{subequations}
\noindent where $\xi_i$ denote slack variables and $\nu\in(0,1]$ is a trade-off parameter corresponding to an expected fraction of outliers within the feature space. The trade-off $\nu$ is an upper bound on the fraction of outliers and lower bound on the fraction of support measures \citep{scholkopf01:OCSVM}. 

The trade-off parameter $\nu$ plays an important role in group anomaly detection. Small $\nu$ implies that anomalous groups are rare compared to the normal groups. Too small $\nu$ leads to some anomalous groups being rejected. On the other hand, large $\nu$ implies that anomalous groups are common. Too large $\nu$ leads to some normal groups being accepted as anomaly. As group anomaly is subtle, one need to choose $\nu$ very carefully to reduce the effort in the interpretation of the results.

By introducing Lagrange multipliers $\bm{\alpha}$, we have $\mathbf{w} = \sum_{i=1}^{\ell}\alpha_i\mu_{\pp{P}_i} = \sum_{i=1}^{\ell}\alpha_i\mathbb{E}_{\pp{P}_i}[k(x,\cdot)]$ and the dual form of \eqref{eq:ocsmm-primal} can be written as
\begin{subequations}
  \label{eq:ocsmm-dual} 
  \begin{align}
    & \underset{\bm{\alpha}}{\text{minimize}} & & 
    \frac{1}{2} \sum_{i=1}^{\ell}\sum_{j=1}^{\ell}\alpha_i\alpha_j\langle\mu_{\pp{P}_i},\mu_{\pp{P}_j}\rangle_{\hbspace} \\
    & \text{subject to} & & 0\leq\alpha_i\leq\frac{1}{\nu\ell},\; \sum_{i=1}^{\ell}\alpha_i = 1 \,.
  \end{align}
\end{subequations}
 
Note that the dual form is a quadratic programming and depends on the inner product $\langle\mu_{\pp{P}_i},\mu_{\pp{P}_j}\rangle_{\hbspace}$. Given that we can compute $\langle\mu_{\pp{P}_i},\mu_{\pp{P}_j}\rangle_{\hbspace}$, we can employ the standard QP solvers to solve \eqref{eq:ocsmm-dual}. 

\subsection{Kernels on Probability Distributions}
\label{sec:kernel-dist}

From \eqref{eq:ocsmm-dual}, we can see that $\abbrvmm{\pp{P}}$ is a feature map associated with the kernel $K:\pspace\times\pspace\rightarrow\mathbb{R}$, defined as $K(\mathbb{P}_i,\mathbb{P}_j)=\langle\abbrvmm{\pp{P}_i},\abbrvmm{\pp{P}_j}\rangle_{\hbspace}$. It follows from Fubini's theorem and reproducing property of $\hbspace$ that 
\begin{eqnarray}
\langle \mu_{\pp{P}_i},\mu_{\pp{P}_j}\rangle_{\hbspace} &=& \iint\langle k(x,\cdot),k(y,\cdot)\rangle_{\hbspace} \dd\mathbb{P}_i(x)\dd\mathbb{P}_j(y) \nonumber \\
&=& \iint k(x,y)\dd\mathbb{P}_i(x)\dd\mathbb{P}_j(y) \enspace . \label{eq:expected-kernel}
\end{eqnarray}
Hence, $K$ is a positive definite kernel on $\pspace$. Given the sample sets $S_1,\dotsc,S_{\ell}$, one can estimate \eqref{eq:expected-kernel} by
\begin{equation}
  \label{eq:empirical-kernel}
  K(\widehat{\pp{P}}_i,\widehat{\pp{P}}_j) 
  = \dfrac{1}{n_i\cdot n_j}\sum_{k=1}^{n_i}\sum_{l=1}^{n_j} k(x^{(i)}_k,x^{(j)}_l)
\end{equation}
\noindent where $x^{(i)}_k\in S_i$, $x^{(j)}_l\in S_j$, and $n_i$ is the number of samples in $S_i$ for $i=1,\dotsc,\ell$.

Previous works in kernel-based anomaly detection have shown that the Gaussian RBF kernel is more suitable than some other kernels such as polynomial kernels \citep{Hoffmann07:KPCA}. Thus we will focus primarily on the Gaussian RBF kernel given by 
\begin{equation}
  \label{eq:rbf-kernel}
  k_{\sigma}(x,x') = \exp\left(-\frac{\norm{x-x'}^2}{2\sigma^2}\right), \enspace x,x'\in\inspace
\end{equation}
\noindent where $\sigma>0$ is a bandwidth parameter. In the sequel, we denote the reproducing kernel Hilbert space associated with kernel $k_{\sigma}$ by $\hbspace_{\sigma}$. Also, let $\Phi:\inspace\rightarrow\hbspace_{\sigma}$ be a feature map such that $k_{\sigma}(x,x')=\langle\Phi(x),\Phi(x')\rangle_{\hbspace_{\sigma}}$.

In group anomaly detection, we always observe the i.i.d. samples from the distribution underlying the group. Thus, it is natural to use the empirical kernel \eqref{eq:empirical-kernel}. However, one may relax this assumption and apply the kernel \eqref{eq:expected-kernel} directly. For instance, if we have a Gaussian distribution $\pp{P}_i=\mathcal{N}(m_i,\Sigma_i)$ and a Gaussian RBF kernel $k_{\sigma}$, we can compute the kernel analytically by
\begin{equation}
  \label{eq:analytic-kernel}
  K(\pp{P}_i,\pp{P}_j) = \frac{\exp\left(-\frac{1}{2}(m_i-m_j)^{\mathsf{T}}B^{-1}(m_i-m_j)\right)}{\abs{\frac{1}{\sigma^2}\Sigma_i + \frac{1}{\sigma^2}\Sigma_j + \mathbf{I}}^{\frac{1}{2}}}
\end{equation}
\noindent where $B = \Sigma_i+\Sigma_j + \sigma^2\mathbf{I}$. This kernel is particularly useful when one want to incorporate the point-wise uncertainty of the observation into the learning algorithm \citep{Muandet12:SMM}. More details will be given in Section \ref{sec:connection} and \ref{sec:experiments}).

\section{Theoretical Analysis}
\label{sec:analysis}

This section presents some theoretical analyses. The geometrical interpretation of OCSMMs is given in Section \ref{sec:geometric}. Then, we discuss the connection of OCSMM to the kernel density estimator in Section \ref{sec:connection}. In the sequel, we will focus on the translation-invariant kernel function to simplify the analysis.

\subsection{Geometric Interpretation}
\label{sec:geometric}

\begin{figure*}[t]
  \centering
  \begin{subfigure}[b]{0.32\linewidth}
    \centering
    \includegraphics[width=1.5in]{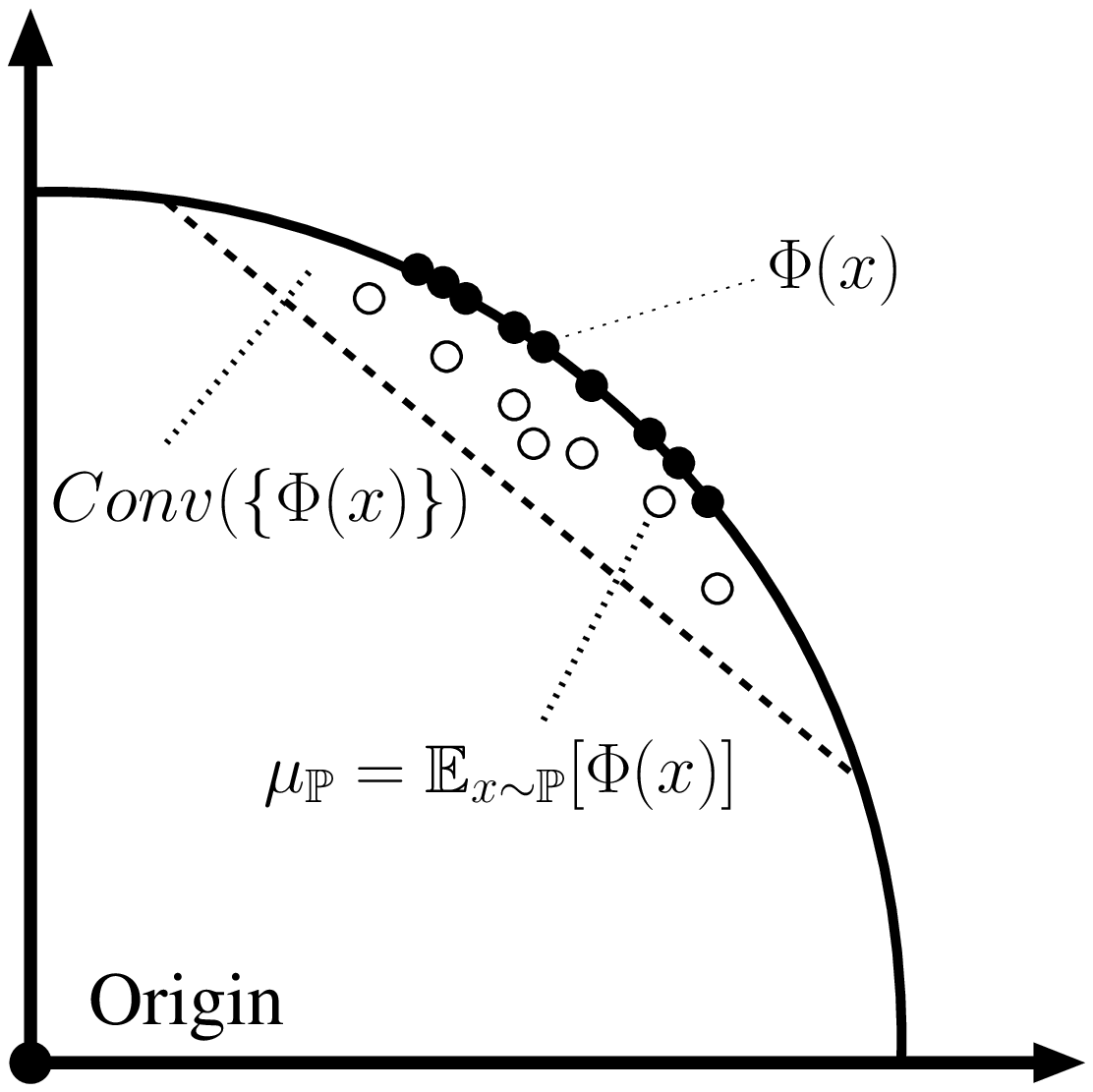}
    \caption{feature map and mean map}
    \label{fig:meanmaps}
  \end{subfigure}  
  \begin{subfigure}[b]{0.32\linewidth}
    \centering
    \includegraphics[width=1.5in]{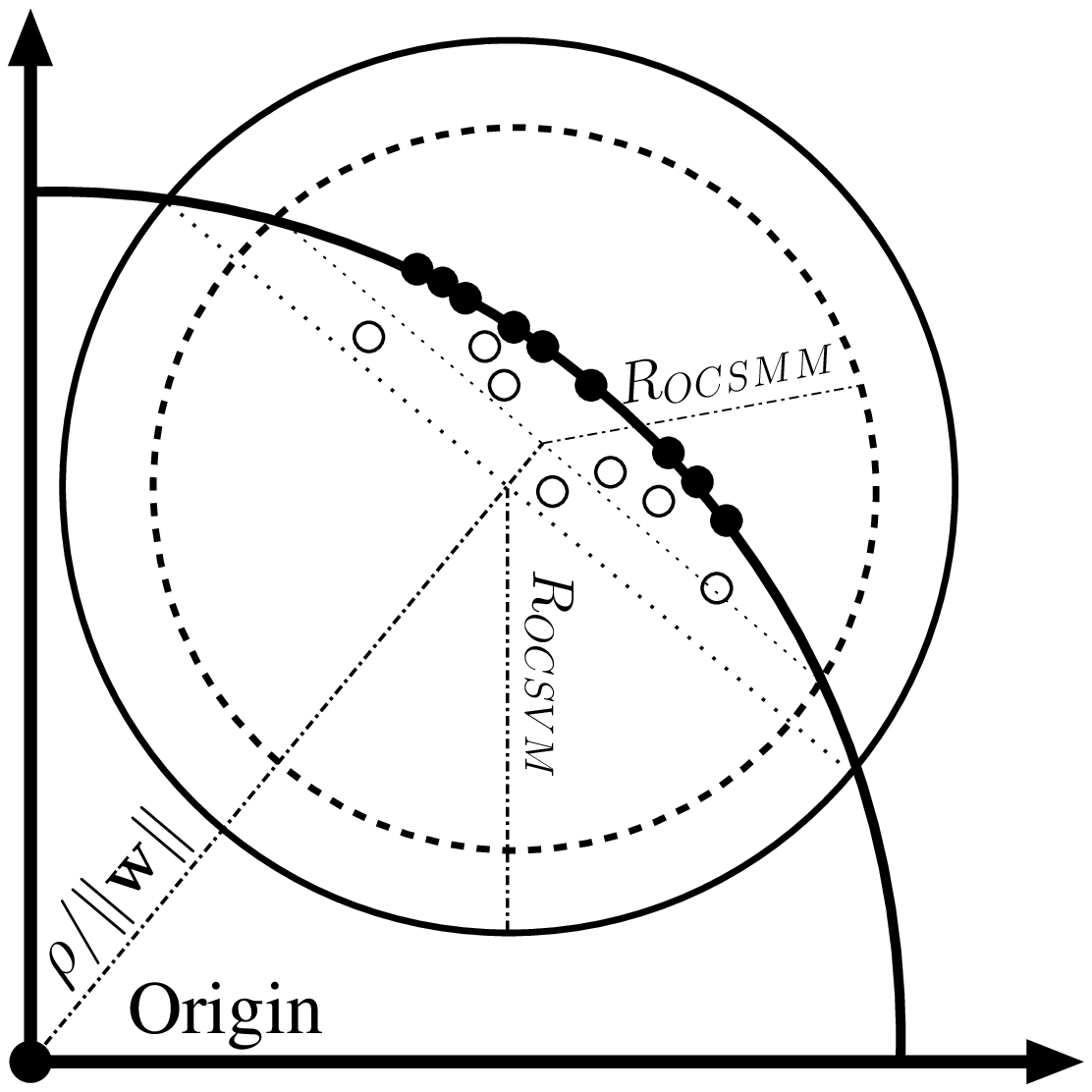}
    \caption{minimum enclosing sphere}
    \label{fig:sphere}
  \end{subfigure}
  \begin{subfigure}[b]{0.32\linewidth}
    \centering
    \includegraphics[width=1.5in]{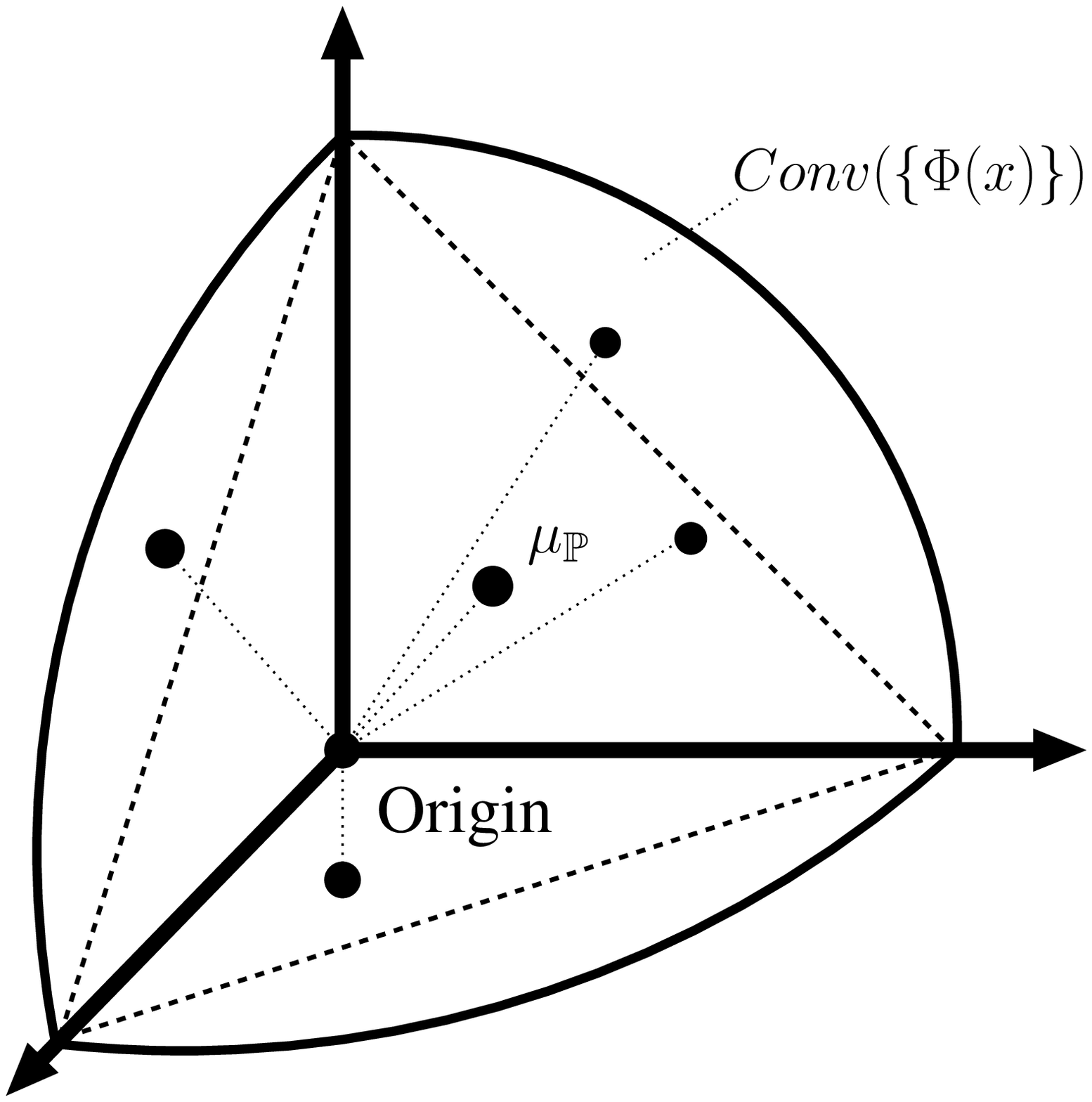}
    \caption{spherical normalization} 
    \label{fig:3dball-rkhs}
  \end{subfigure}
  \caption{(\subref{fig:meanmaps}) The two dimensional representation of the RKHS of Gaussian RBF kernels. Since the kernels depend only on $x-x'$, $k(x,x)$ is constant. Therefore, all feature maps $\Phi(x)$ (black dots) lie on a sphere in feature space. Hence, for any probability distribution $\pp{P}$, its mean embedding $\abbrvmm{\pp{P}}$ always lies in the convex hull of the feature maps, which in this case, forms a segment of the sphere. (\subref{fig:sphere}) In general, the solution of OCSMM is different from the minimum enclosing sphere. (\subref{fig:3dball-rkhs}) Three dimensional sphere in the feature space. For the Gaussian RBF kernel, the kernel mean embeddings of all distributions always lie inside the segment of the sphere. In addition, the angle between any pair of mean embeddings is always greater than zero. Consequently, the mean embeddings can be scaled, e.g., to lie on the sphere, and the map is still injective.}
  \label{fig:rbf-rkhs}
\end{figure*}

For translation-invariant kernel, $k(x,x)$ is constant for all $x\in\inspace$. That is, $\norm{\Phi(x)}_{\hbspace}=\tau$ for some constant $\rho$. This implies that all of the images $\Phi(x)$ lie on the sphere in the feature space (cf. Figure \ref{fig:meanmaps}). Consequently, the following inequality holds 
\begin{eqnarray*}
  \norm{\abbrvmm{\pp{P}}}_{\hbspace} = \left\| \int k(x,\cdot) \dd\pp{P}(x)\right\|_{\hbspace} \leq \int \norm{k(x,\cdot)}_{\hbspace}\dd\pp{P}(x) = \tau \, ,
\end{eqnarray*}
\noindent which shows that all mean embeddings lie inside the sphere (cf. Figure \ref{fig:meanmaps}). As a result, we can establish the existence and uniqueness of the separating hyperplane $\textbf{w}$ in \eqref{eq:ocsmm-primal} through the following theorem.

\begin{theorem}
  \label{thm:hyperplane}
  There exists a unique separating hyperplane $\textbf{w}$ as a solution to \eqref{eq:ocsmm-primal} that separates $\mu_{\pp{P}_1},\mu_{\pp{P}_2},\ldots,\mu_{\pp{P}_{\ell}}$ from the origin.
\end{theorem}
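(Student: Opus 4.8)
\emph{Strategy.} The statement bundles three assertions: that the convex program \eqref{eq:ocsmm-primal} attains its minimum, that the $\textbf{w}$-component of a minimiser is the same for every minimiser, and that this $\textbf{w}$ is a genuine hyperplane ($\textbf{w}\neq\mathbf{0}$) with the origin strictly on one side. The plan is to establish these in that order, using throughout the geometry recorded above: for a translation-invariant kernel (in particular the Gaussian RBF \eqref{eq:rbf-kernel}) every $\abbrvmm{\pp{P}_i}$ lies in the ball of radius $\tau$, and moreover $\langle\abbrvmm{\pp{P}_i},\abbrvmm{\pp{P}_j}\rangle_{\hbspace}=\iint k(x,y)\dd\pp{P}_i(x)\dd\pp{P}_j(y)>0$ for all $i,j$, so the mean embeddings in fact lie in an ``acute'' region of the sphere.

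\emph{Existence.} First I would reduce to finitely many dimensions: the constraints of \eqref{eq:ocsmm-primal} involve $\textbf{w}$ only through the inner products $\langle\textbf{w},\abbrvmm{\pp{P}_i}\rangle_{\hbspace}$, so replacing $\textbf{w}$ by its projection onto $\mathrm{span}\{\abbrvmm{\pp{P}_1},\dots,\abbrvmm{\pp{P}_\ell}\}$ keeps the point feasible without increasing the objective. Since $(\textbf{w},\rho,\xi)=(\mathbf{0},0,\mathbf{0})$ is feasible with objective $0$, it suffices to look at the sublevel set $\{\text{objective}\le 0\}$; using $\langle\textbf{w},\abbrvmm{\pp{P}_i}\rangle_{\hbspace}\le\tau\norm{\textbf{w}}_{\hbspace}$ and $\nu\le 1$ one bounds the objective below by $\tfrac12\norm{\textbf{w}}_{\hbspace}^2-\tau\norm{\textbf{w}}_{\hbspace}$, so $\norm{\textbf{w}}_{\hbspace}$ is bounded there, and a minimising sequence can be chosen along which $\rho$ and the $\xi_i$ also stay bounded (take $\xi_i=\max\{0,\rho-\langle\textbf{w},\abbrvmm{\pp{P}_i}\rangle_{\hbspace}\}$ and, when $\nu<1$, truncate $\rho$ at $\tau\norm{\textbf{w}}_{\hbspace}$, neither of which worsens the objective). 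Finite-dimensional compactness and continuity of the objective then deliver a minimiser; alternatively, one can minimise the dual \eqref{eq:ocsmm-dual}, whose feasible set $\{\bm{\alpha}\in[0,\tfrac1{\nu\ell}]^\ell:\sum_i\alpha_i=1\}$ is nonempty (as $\tfrac1{\nu\ell}\ge\tfrac1\ell$) and compact, and then recover $\textbf{w}$ from the stationarity relation.

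\emph{Uniqueness and nontriviality.} The feasible set is convex and the objective is convex in $(\textbf{w},\rho,\xi)$ but \emph{strictly} convex in $\textbf{w}$ because of the term $\tfrac12\langle\textbf{w},\textbf{w}\rangle_{\hbspace}$; hence two minimisers with different $\textbf{w}$-components would have a feasible midpoint of strictly smaller objective, a contradiction, so $\textbf{w}$ is unique even though $\rho$ and $\xi$ need not be. For nontriviality, the stationarity relation $\textbf{w}=\sum_i\alpha_i\abbrvmm{\pp{P}_i}$ with $\alpha_i\ge 0$, $\sum_i\alpha_i=1$ exhibits $\textbf{w}$ as a convex combination of the mean embeddings; writing $\delta:=\min_{i,j}\langle\abbrvmm{\pp{P}_i},\abbrvmm{\pp{P}_j}\rangle_{\hbspace}>0$ one gets $\norm{\textbf{w}}_{\hbspace}^2=\sum_{i,j}\alpha_i\alpha_j\langle\abbrvmm{\pp{P}_i},\abbrvmm{\pp{P}_j}\rangle_{\hbspace}\ge\delta>0$, so $\textbf{w}\neq\mathbf{0}$ and $\mathbf{0}\notin\mathrm{conv}\{\abbrvmm{\pp{P}_1},\dots,\abbrvmm{\pp{P}_\ell}\}$. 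The same fact, more quantitatively, follows from exhibiting the feasible point $(t\textbf{w}_0,t\rho_0,\mathbf{0})$ with $\textbf{w}_0=\sum_j\abbrvmm{\pp{P}_j}$ and $\rho_0=\min_i\langle\textbf{w}_0,\abbrvmm{\pp{P}_i}\rangle_{\hbspace}>0$, whose objective $\tfrac{t^2}2\norm{\textbf{w}_0}_{\hbspace}^2-t\rho_0$ is negative for small $t>0$; thus the optimal value is strictly negative, $\textbf{w}\neq\mathbf{0}$, and at the optimum $\rho\ge\tfrac12\norm{\textbf{w}}_{\hbspace}^2>0$, so the origin (where $\langle\textbf{w},\cdot\rangle_{\hbspace}=0$) lies strictly on the negative side of $\{f\in\hbspace:\langle\textbf{w},f\rangle_{\hbspace}=\rho\}$, i.e.\ $\textbf{w}$ separates $\abbrvmm{\pp{P}_1},\dots,\abbrvmm{\pp{P}_\ell}$ from the origin in the soft-margin sense.

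\emph{Main obstacle.} The one genuinely delicate step is existence: the objective of \eqref{eq:ocsmm-primal} is not coercive in $(\rho,\xi)$ --- indeed for $\nu=1$ the set $\{\text{objective}\le 0\}$ is unbounded along $\textbf{w}=\mathbf{0}$, $\rho=\xi_i\to\infty$ --- so one must first trim a minimising sequence so that $\rho$ and $\xi$ stay bounded before invoking compactness, or sidestep the issue by working with the compact dual feasible set. The remaining steps are routine convex analysis together with the strict positivity $\langle\abbrvmm{\pp{P}_i},\abbrvmm{\pp{P}_j}\rangle_{\hbspace}>0$ supplied by the RKHS geometry of Figure \ref{fig:meanmaps}.
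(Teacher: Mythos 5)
Your proposal is correct, but it takes a genuinely different route from the paper. The paper's proof is two sentences: it asserts that, by the ``separability of the feature maps $\Phi(x)$,'' the convex hull of $\mu_{\pp{P}_1},\dots,\mu_{\pp{P}_\ell}$ does not contain the origin, and then delegates existence and uniqueness wholesale to the supporting-hyperplane theorem and the OCSVM analysis of \citet{Scholkopf2001:LKS}. You instead give a self-contained convex-programming argument: existence by restricting to $\mathrm{span}\{\abbrvmm{\pp{P}_i}\}$ and trimming the non-coercive $(\rho,\xi)$ directions before invoking compactness (or by passing to the compact dual feasible set), uniqueness of $\textbf{w}$ by strict convexity of $\tfrac12\norm{\textbf{w}}_\hbspace^2$, and non-degeneracy via the quantitative bound $\norm{\textbf{w}}_\hbspace^2\ge\min_{i,j}\langle\abbrvmm{\pp{P}_i},\abbrvmm{\pp{P}_j}\rangle_\hbspace>0$ together with the observation that the optimal value is strictly negative, hence $\rho>\tfrac12\norm{\textbf{w}}_\hbspace^2>0$. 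What your version buys is precisely what the paper's one-liner glosses over: it makes explicit why the origin is excluded from the convex hull (strict pointwise positivity of the Gaussian RBF kernel forces the Gram matrix of mean embeddings to be entrywise positive --- note this is the actual content of the paper's unexplained ``separability'' claim, and it does require $k>0$, not merely translation invariance), and it confronts the genuine subtlety that the primal objective is not coercive in $(\rho,\xi)$, so existence is not immediate from convexity alone. The paper's version buys brevity at the cost of leaning entirely on the cited OCSVM result; both arguments ultimately pivot on the same geometric fact about where the mean embeddings sit in $\hbspace$.
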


\begin{proof}
  Due to the separability of the feature maps $\Phi(x)$, the convex hull of the mean embeddings $\mu_{\pp{P}_1},\mu_{\pp{P}_2},\ldots,\mu_{\pp{P}_{\ell}}$ does not contain the origin. The existence and uniqueness of the hyperplane then follows from the supporting hyperplane theorem \citep{Scholkopf2001:LKS}.
\end{proof}

By Theorem \ref{thm:hyperplane}, the OCSMM is a simple generalization of OCSVM to the space of probability distributions. Furthermore, the straightforward generalization will allow for a direct application of an efficient learning algorithm as well as existing theoretical results.

There is a well-known connection between the solution of OCSVM with translation invariant kernels and the center of the minimum enclosing sphere (MES) \citep{Tax99:SVDD,Tax04:SVD}. Intuitively, this is not the case for OCSMM, even when the kernel $k$ is translation-invariant, as illustrated in Figure \ref{fig:sphere}. Fortunately, the connection between OCSMM and MES can be made precise by applying the spherical normalization
\begin{equation}
  \label{eq:normalized-kernel}
  \langle \abbrvmm{\pp{P}},\abbrvmm{\pp{Q}}\rangle_{\hbspace} 
  \longmapsto \frac{\langle \abbrvmm{\pp{P}},\abbrvmm{\pp{Q}}\rangle_{\hbspace} }
  {\sqrt{\langle \abbrvmm{\pp{P}},\abbrvmm{\pp{P}}\rangle_{\hbspace}\langle \abbrvmm{\pp{Q}},\abbrvmm{\pp{Q}}\rangle_{\hbspace}}}
\end{equation}

After the normalization, $\norm{\abbrvmm{\pp{P}}}_{\hbspace}=1$ for all $\pp{P}\in\pspace$. That is, all mean embeddings lie on the unit sphere in the feature space. Consequently, the OCSMM and MES are equivalent after the normalization. 

Given the equivalence between OCSMM and MES, it is natural to ask if the spherical normalization \eqref{eq:normalized-kernel} preserves the injectivity of the Hilbert space embedding. In other words, is there an information loss after the normalization? The following theorem answers this question for kernel $k$ that satisfies some reasonable assumptions.

\begin{theorem}
  \label{thm:preserve-injectivity}
  Assume that $k$ is characteristic and the samples are linearly independent in the feature space $\hbspace$. Then, the spherical normalization preserves the injectivity of the mapping $\mu: \pspace\rightarrow\hbspace$.
\end{theorem}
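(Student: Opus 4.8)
The plan is to show that spherical normalization \eqref{eq:normalized-kernel} is exactly the reparametrization $\tilde\mu:\pspace\to\hbspace$, $\pp{P}\mapsto\mu_{\pp{P}}/\norm{\mu_{\pp{P}}}_{\hbspace}$, so that "preserving injectivity" means proving $\tilde\mu$ is injective. A preliminary step is well-definedness: for a bounded, translation-invariant, characteristic kernel one has $\norm{\mu_{\pp{P}}}_{\hbspace}^{2}=\iint k(x,y)\dd\pp{P}(x)\dd\pp{P}(y)>0$ for every $\pp{P}\in\pspace$ (for the Gaussian RBF kernel \eqref{eq:rbf-kernel} this is immediate since $k>0$ pointwise; in general it follows from Bochner's theorem because the spectral measure of a characteristic translation-invariant kernel has full support, so it charges the neighbourhood of the origin on which the characteristic function of $\pp{P}$ is nonzero). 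Hence $\tilde\mu$ is defined on all of $\pspace$ and its image lies on the unit sphere of $\hbspace$.

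Next comes the reduction. Since $\tilde\mu(\pp{P})$ and $\tilde\mu(\pp{Q})$ are unit vectors, $\tilde\mu(\pp{P})=\tilde\mu(\pp{Q})$ holds if and only if $\mu_{\pp{P}}=c\,\mu_{\pp{Q}}$ for some $c>0$; equivalently, by the Cauchy--Schwarz inequality in $\hbspace$, if and only if the angle between $\mu_{\pp{P}}$ and $\mu_{\pp{Q}}$ is zero, i.e. $K(\pp{P},\pp{Q})=\sqrt{K(\pp{P},\pp{P})\,K(\pp{Q},\pp{Q})}$ with $K$ as in \eqref{eq:expected-kernel}. So it suffices to prove: if $\mu_{\pp{P}}=c\,\mu_{\pp{Q}}$ with $c>0$, then $\pp{P}=\pp{Q}$. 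The case $c=1$ is free: then $\mu_{\pp{P}}=\mu_{\pp{Q}}$, and since $k$ is characteristic, Theorem \ref{thm:characteristic} (injectivity of the map \eqref{eq:meanmap}) gives $\pp{P}=\pp{Q}$.

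The crux is ruling out $c\neq 1$. In that case $\mu$, applied to the finite signed measure $\pp{P}-c\pp{Q}$, vanishes. The strategy is to upgrade the injectivity of $\mu$ from $\pspace$ to signed measures of this form using the two hypotheses: the characteristic property fixes the direction, while linear independence of the sample feature maps in $\hbspace$ rules out a nontrivial signed combination of them collapsing to $0$, forcing $\pp{P}-c\pp{Q}=0$ as a measure; but this is impossible, because $(\pp{P}-c\pp{Q})(\inspace)=1-c\neq 0$. Hence $c=1$, and injectivity of $\tilde\mu$ follows. I expect this last step to be the main obstacle, precisely because "characteristic" constrains only probability measures, not signed ones. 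A clean way to close it when $\inspace$ is compact and $k$ is universal (as for the Gaussian RBF kernel) is to approximate the constant function $\mathbf{1}$ by $f\in\hbspace$ in $\norm{\cdot}_{\infty}$: then $1=\mathbb{E}_{\pp{P}}[\mathbf{1}]=\lim_{f\to\mathbf{1}}\langle\mu_{\pp{P}},f\rangle_{\hbspace}=\lim_{f\to\mathbf{1}}c\,\langle\mu_{\pp{Q}},f\rangle_{\hbspace}=c$, so $c=1$ directly; for a merely characteristic $k$ it is the linear-independence assumption that must supply this leverage.
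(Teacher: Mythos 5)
Your proposal is correct in structure and, in fact, more careful than the paper's own argument. The paper proves the theorem by contradiction but immediately writes the failure of normalized injectivity as $\mu_{\pp{P}}=\mu_{\pp{Q}}$ for distinct $\pp{P},\pp{Q}$, which is the failure of the \emph{unnormalized} map and is already excluded by Theorem \ref{thm:characteristic}; it then asserts that $\int k(x,\cdot)\dd(\pp{P}-\pp{Q})(x)=0$ forces some $k(x,\cdot)$ to be ``linearly dependent,'' contradicting the linear-independence assumption. The scaling constant is never introduced. You correctly identify that equality after spherical normalization only gives $\mu_{\pp{P}}=c\,\mu_{\pp{Q}}$ with $c=\norm{\mu_{\pp{P}}}_{\hbspace}/\norm{\mu_{\pp{Q}}}_{\hbspace}>0$, that $c=1$ is the easy case handled by the characteristic property, and that ruling out $c\neq 1$ is the real content --- precisely because ``characteristic'' controls only differences of probability measures, not general signed measures such as $\pp{P}-c\pp{Q}$. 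Your two closures of that gap are both sound where they apply: the total-mass argument $(\pp{P}-c\pp{Q})(\inspace)=1-c$ combined with approximating the constant function $\mathbf{1}$ in sup norm (valid for universal kernels on compact domains, hence for the Gaussian RBF case the paper actually uses), and the observation that for empirical measures the linear independence of the finitely many sample feature maps forces the coefficients of $\frac{1}{n_i}\sum_k k(x^{(i)}_k,\cdot)-\frac{c}{n_j}\sum_l k(x^{(j)}_l,\cdot)=0$ to vanish, which upon summing yields $c=1$; the latter is evidently what the paper's hypothesis about ``the samples'' is meant to supply, though its proof never spells this out. The only caveat is that you leave the case of a merely characteristic kernel with non-empirical measures open, but you flag this explicitly, and the paper's proof does not close it either. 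In short: same destination, but your route repairs a genuine omission in the paper's reasoning rather than reproducing it.
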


\begin{proof}
  Let us assume the normalization does not preserve the injectivity of the mapping. Thus, there exist two distinct probability distributions $\pp{P}$ and $\pp{Q}$ for which
  \begin{eqnarray*}
    \mu_{\pp{P}} &=& \mu_{\pp{Q}} \\
    \int k(x,\cdot)\dd\pp{P}(x) &=& \int k(x,\cdot)\dd\pp{Q}(x) \\
    \int k(x,\cdot)\dd(\pp{P}-\pp{Q})(x) &=& 0 \enspace .
  \end{eqnarray*}
  \noindent As $\pp{P}\neq\pp{Q}$, the last equality holds if and only if there exists $x\in\inspace$ for which $k(x,\cdot)$ are linearly dependent, which contradicts the assumption. Consequently, the spherical normalization must preserve the injectivity of the mapping.
\end{proof}

The Gaussian RBF kernel satisfies the assumption given in Theorem \ref{thm:preserve-injectivity} as the kernel matrix will be full-rank and thereby the samples are linearly independent in the feature space. Figure \ref{fig:3dball-rkhs} depicts an effect of the spherical normalization.

It is important to note that the spherical normalization does not necessarily improve the performance of the OCSMM. It ensures that all the information about the distributions are preserved.

\subsection{OCSMM and Density Estimation}
\label{sec:connection}

In this section we make a connection between the OCSMM and kernel density estimation (KDE). First, we give a definition of the KDE. Let $x_1,x_2,\ldots,x_n$ be an i.i.d. samples from some distribution $F$ with unknown density $f$, the KDE of $f$ is defined as
\begin{equation}
  \label{eq:kde}
  \hat{f}(y) = \frac{1}{nh}\sum_{i=1}^n k\left(\frac{y-x_i}{h}\right)
\end{equation}
For $\hat{f}$ to be a density, we require that the kernel satisfies $k(\cdot,\cdot)\geq 0$ and $\int k(x,\cdot) \dd x=1$, which includes, for example, the Gaussian kernel, the multivariate Student kernel, and the Laplacian kernel.

When $\nu=1$, it is well-known that, under some technical assumptions, the OCSVM corresponds exactly to the KDE \citep{scholkopf01:OCSVM}. That is, the solution $\mathbf{w}$ of \eqref{eq:ocsmm-primal} can be written as a uniform sum over training samples similar to \eqref{eq:kde}. Moreover, setting $\nu < 1$ yields a sparse representation where the summand consists of only support vectors of the OCSVM.

Interestingly, we can make a similar correspondence between the KDE and the OCSMM. It follows from Lemma 4 of \citet{Muandet12:SMM} that for certain classes of training probability distributions, the OCSMM on these distributions corresponds to the OCSVM on some training samples equipped with an appropriate kernel function. To understand this connection, consider the OCSMM with the Gaussian RBF kernel $k_{\sigma}$ and isotropic Gaussian distributions $\mathcal{N}(m_1;\sigma^2_1),\mathcal{N}(m_2;\sigma^2_2),\ldots,\mathcal{N}(m_n;\sigma^2_n)$\footnote{We adopt the Gaussian distributions here for the sake of simplicity. More general statement for non-Gaussian distributions follows straightforwardly.}. We analyze this scenario under two conditions:

\begin{paragraph}{(C1) Identical bandwidth.} If $\sigma_i=\sigma_j$ for all $1\leq i,j \leq n$, the OCSMM is equivalent to the OCSVM on the training samples $m_1,m_2,\ldots,m_n$ with Gaussian RBF kernel $k_{\sigma^2+\sigma^2_i}$ (cf. the kernel \eqref{eq:analytic-kernel}). Hence, the OCSMM corresponds to the OCSVM on the means of the distributions with kernel of larger bandwidth.
\end{paragraph}

\begin{paragraph}{(C2) Variable bandwidth.} Similarly, if $\sigma_i\neq\sigma_j$ for some $1\leq i,j \leq n$, the OCSMM is equivalent to the OCSVM on the training samples $m_1,m_2,\ldots,m_n$ with Gaussian RBF kernel $k_{\sigma^2+\sigma^2_i}$. Note that the kernel bandwidth may be different at each training samples. Thus, OCSMM in this case corresponds to the OCSVM with variable bandwidth parameters.
\end{paragraph}

On the one hand, the above scenario allows the OCSVM to cope with noisy/uncertain inputs, leading to more robust point anomaly detection algorithm. That is, we can treat the means as the measurements and the covariances as the measurement uncertainties (cf. Section \ref{sec:exp-noisy}). On the other hand, one can also interpret the OCSMM when $\nu=1$ as a generalization of traditional KDE, where we have a data-dependent bandwidth at each data point. This type of KDE is known in the statistics as variable kernel density estimators (VKDEs) \citep{Breiman77:VKDE,Abramson82:BV,Terrell1992:VKDE}. For $\nu < 1$, the OCSMM gives a sparse representation of the VKDE.

Formally, the VKDE is characterized by \eqref{eq:kde} with an adaptive bandwidth $h(x_i)$. For example, the bandwidth is adapted to be larger where the data are less dense, with the aim to reduce the bias. There are basically two different views of VKDE. The first is known as a \emph{balloon estimator} \citep{Terrell1992:VKDE}. Essentially, its bandwidth may depend only on the point at which the estimate is taken, i.e., the bandwidth in \eqref{eq:kde} may be written as $h(y)$. The second type of VKDE is a \emph{sample smoothing estimator} \citep{Terrell1992:VKDE}. As opposed to the balloon estimator, it is a mixture of individually scaled kernels centered at each observation, i.e., the bandwidth is $h(x_i)$. The advantage of balloon estimator is that it has a straightforward asymptotic analysis, but the final estimator may not be a density. The sample smoothing estimator is a density if $k$ is a density, but exhibits \emph{non-locality}. 

Both types of the VKDEs may be seen from the OCSMM point of view. Firstly, under the condition \textbf{(C1)}, the balloon estimator can be recovered by considering different test distribution $\pp{P}_t = \mathcal{N}(m_t;\sigma_t)$. As $\sigma_t\rightarrow 0$, one obtain the standard KDE on $m_t$. Similarly, the OCSMM under the condition \textbf{(C2)} with $\pp{P}_t=\delta_{m_t}$ gives the sample smoothing estimator. Interestingly, the OCSMM under the condition \textbf{(C2)} with $\pp{P}_t = \mathcal{N}(m_t;\sigma_t)$ results in a combination of these two types of the VKDEs.


In summary, we show that many variants of KDE can be seen as solutions to the regularization functional \eqref{eq:ocsmm-primal}, and thereby provides an insight into a connection between large-margin approach and kernel density estimation.

\section{Experiments}
\label{sec:experiments}

We firstly illustrate a fundamental difference between point and group anomaly detection problems. Then, we demonstrate an advantage of OCSMM on uncertain data when the noise is observed explicitly. Lastly, we compare the OCSMM with existing group anomaly detection techniques, namely, $K$-nearest neighbor (KNN) based anomaly detection \citep{Zhao09:Anomaly} with NP-L$_2$ divergence and NP-Renyi divergence \citep{Poczos11:Divergence}, and Multinomial Genre Model (MGM) \citep{Xiong11:HPM} on Sloan Digital Sky Survey (SDSS) dataset and High Energy Particle Physics dataset.

\begin{paragraph}{Model Selection and Setup.}
  One of the long-standing problems of one-class algorithms is model selection. Since no labeled data is available during training, we cannot perform cross validation. To encourage a fair comparison of different algorithms in our experiments, we will try out different parameter settings and report the best performance of each algorithm. We believe this simple approach should serve its purpose at reflecting the relative performance of different algorithms. We will employ the Gaussian RBF kernel \eqref{eq:rbf-kernel} throughout the experiments. For the OCSVM and the OCSMM, the bandwidth parameter $\sigma^2$ is fixed at $\mathrm{median}\{\norm{x^{(i)}_k - x^{(j)}_l}^2\}$ for all $i,j,k,l$ where $x^{(i)}_k$ denotes the $k$-th data point in the $i$-th group, and we consider $\nu=(0.1,0.2,\ldots,0.9)$. The OCSVM treats group means as training samples. For synthetic experiments with OCSMM, we use the empirical kernel \eqref{eq:empirical-kernel}, whereas the non-linear kernel $K(\pp{P}_i,\pp{P}_j)=\exp(\|\mu_{\pp{P}_i}-\mu_{\pp{P}_j}\|_{\hbspace}^2/2\gamma^2)$ will be used for real data where we set $\gamma=\sigma$. Our experiments suggest that these choices of parameters usually work well in practice. For KNN-L$_2$ and KNN-Renyi ($\alpha$=0.99), we consider when there are 3,5,7,9, and 11 nearest neighbors. For MGM, we follow the same experimental setup as in \citet{Xiong11:HPM}.
\end{paragraph}

\begin{figure*}[t!]
  \centering 
  \begin{subfigure}[b]{0.25\linewidth}
    \centering
    \includegraphics[width=\linewidth]{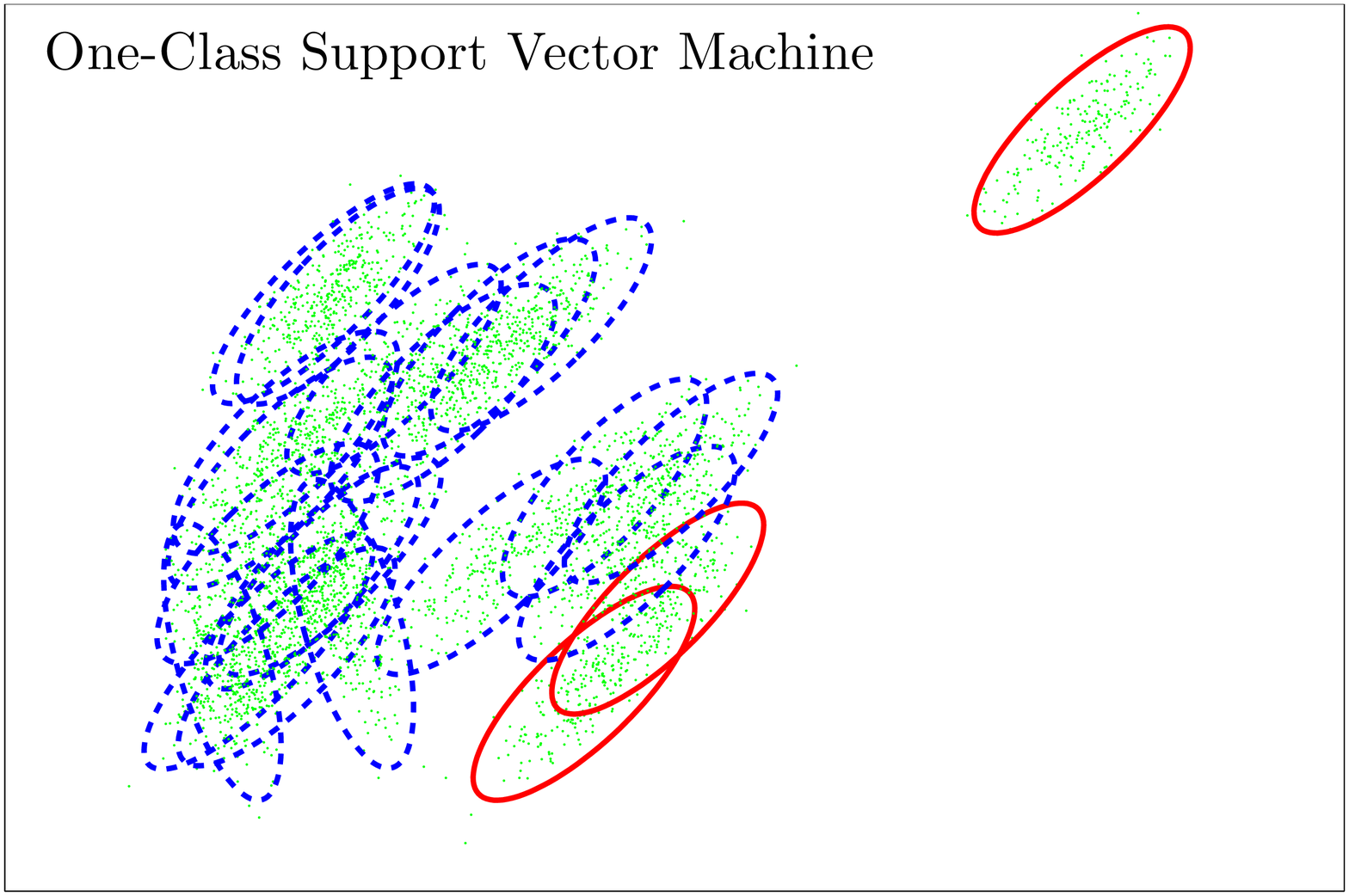} \\
    \includegraphics[width=\linewidth]{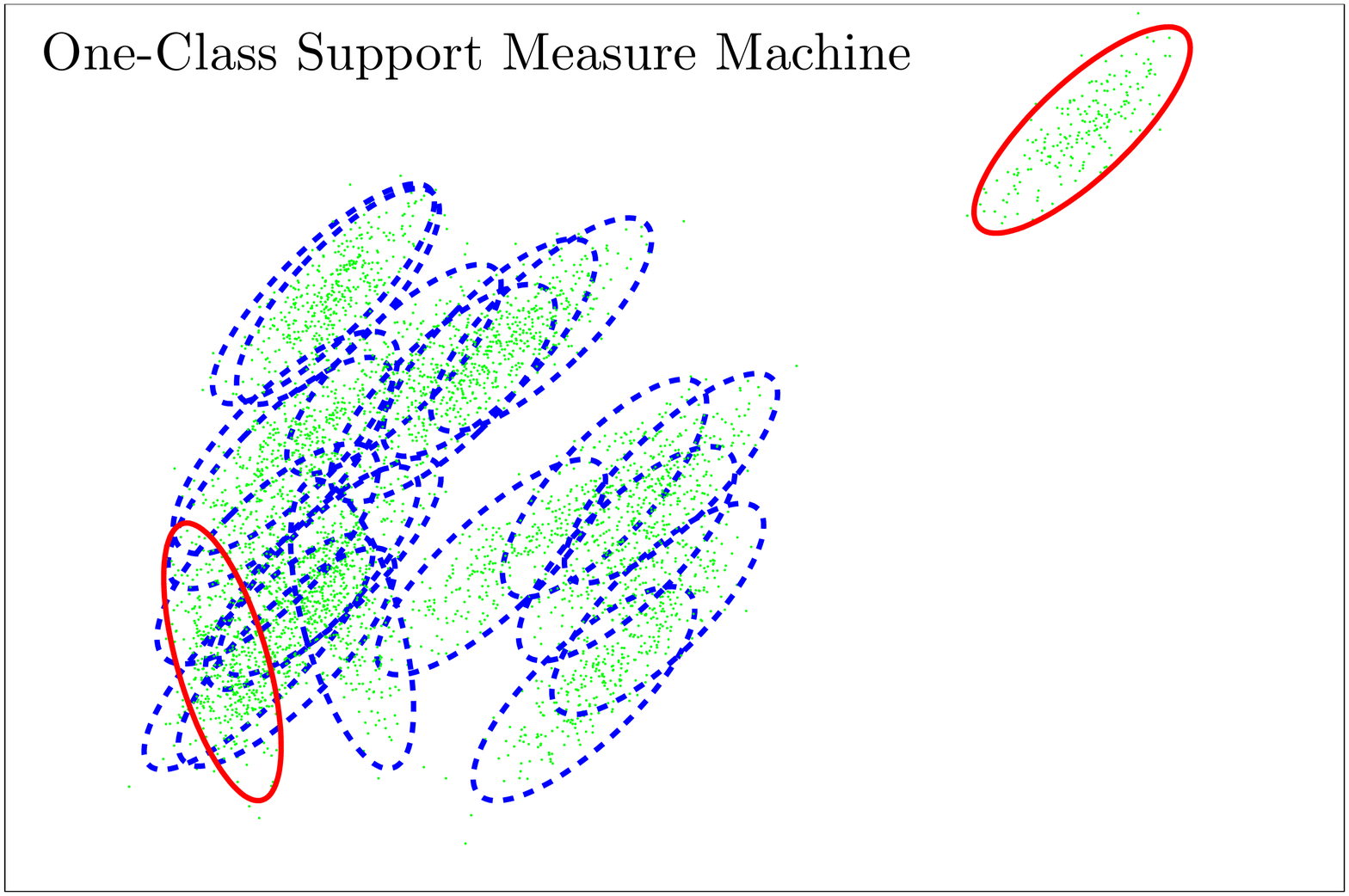}
    \caption{OCSVM vs OCSMM}
    \label{fig:ocsvm-ocsmm}
  \end{subfigure}
  \hfill 
  \begin{subfigure}[b]{0.7\linewidth} 
    \centering
    \includegraphics[width=0.98\linewidth]{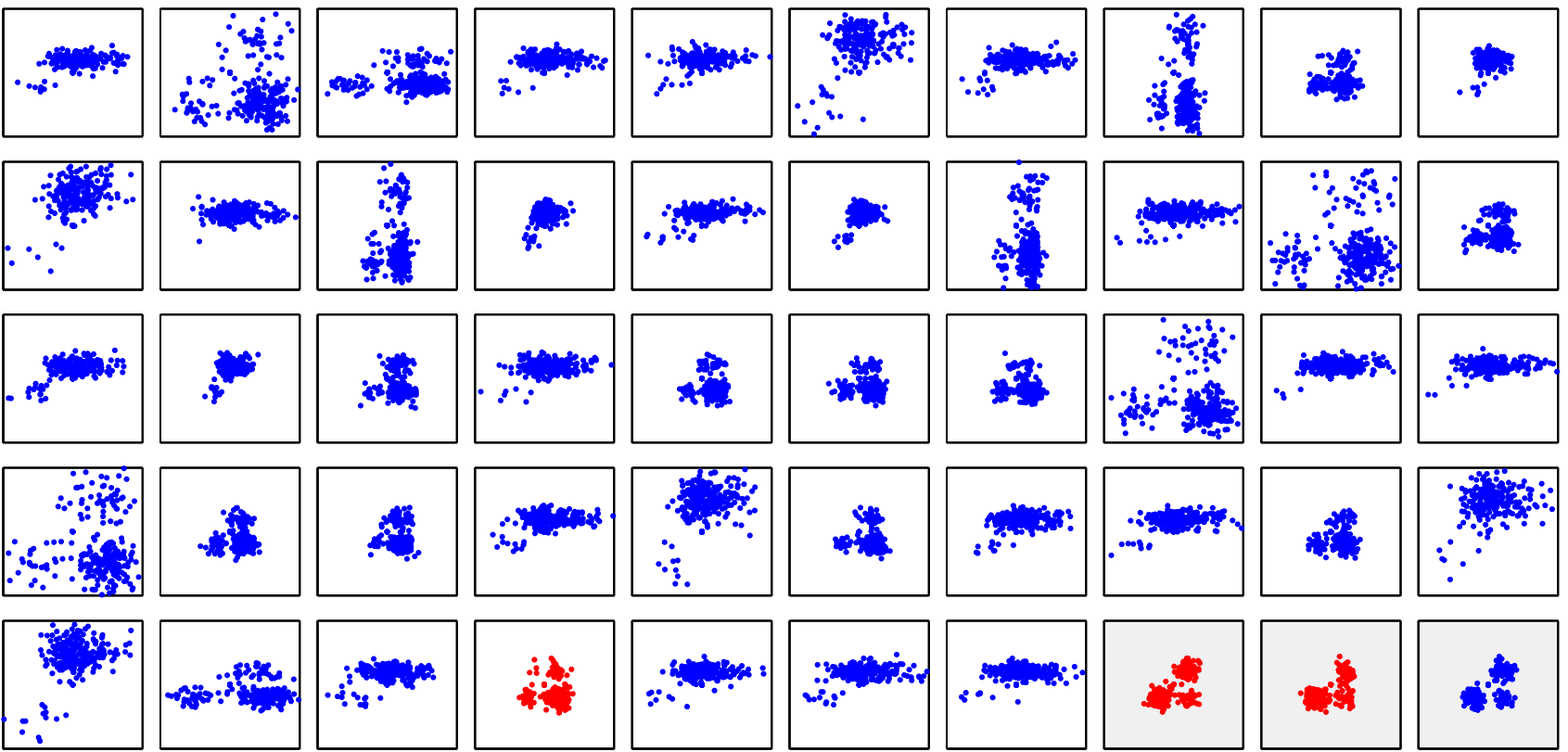}
    \caption{The results of the OCSMM on the mixture of Gaussian dataset}
    \label{fig:synthetic-results}
  \end{subfigure}
  \caption{(\subref{fig:ocsvm-ocsmm}) The results of group anomaly detection on synthetic data obtained from the OCSVM and the OCSMM. Blue dashed ovals represent the normal groups, whereas red ovals represent the detected anomalous groups. The OCSVM is only able to detect the anomalous groups that are spatially far from the rest in the dataset, whereas the OCSMM also takes into account other higher-order statistics and therefore can also detect anomalous groups which possess distinctive properties. (\subref{fig:synthetic-results}) The results of the OCSMM on the synthetic data of the mixture of Gaussian. The shaded boxes represent the anomalous groups that have different mixing proportion to the rest of the dataset. The OCSMM is able to detects the anomalous groups although they look reasonably normal and cannot be easily distinguished from other groups in the data set based only on an inspection.}
\end{figure*}



\subsection{Synthetic Data}


To illustrate the difference between point anomaly and group anomaly, we represent the group of data points by the 2-dimensional Gaussian distribution. We generate 20 normal groups with the covariance $\bm{\Sigma} = [0.01, 0.008; 0.008, 0.01]$. The means of these groups are drawn uniformly from $[0,1]$. Then, we generate 2 anomalous groups of Gaussian distributions whose covariances are rotated by 60 degree from the covariance $\bm{\Sigma}$. Furthermore, we perturb one of the normal groups to make it relatively far from the rest of the dataset to introduce an additional degree of anomaly (cf. Figure \ref{fig:ocsvm-ocsmm}). Lastly, we generate 100 samples from each of these distributions to form the training set.

For the OCSVM, we represent each group by its empirical average. Since the expected proportion of outliers in the dataset is approximately $10\%$, we use $\nu=0.1$ accordingly for both OCSVM and OCSMM. Figure \ref{fig:ocsvm-ocsmm} depicts the result which demonstrates that the OCSMM can detect anomalous aggregate patterns undetected by the OCSVM.

Then, we conduct similar experiment as that in \citet{Xiong11:HPM}. That is, the groups are represented as a mixture of four 2-dimensional Gaussian distributions. The means of the mixture components are $[-1,-1],[1,-1],[0,1],[1,1]$ and the covariances are all $\bm{\Sigma}=0.15\times\mathbf{I}_2$, where $\mathbf{I}_2$ denotes the 2D identity matrix. Then, we design two types of normal groups, which are specified by two mixing proportions $[0.22, 0.64, 0.03, 0.11]$ and $[0.22, 0.03, 0.64, 0.11]$, respectively. To generate a normal group, we first decide with probability $[0.48, 0.52]$ which mixing proportion will be used. Then, the data points are generated from mixture of Gaussian using the specified mixing proportion. The mixing proportion of the anomalous group is $[0.61, 0.1, 0.06, 0.23]$.

We generated $47$ normal groups with $n_i\sim \mathrm{Poisson}(300)$ instances in each group. Note that the individual samples in each group are perfectly normal compared to other samples. To test the performance of our technique, we inject the group anomalies, where the individual points are normal, but they together as a group look anomalous. In this anomalous group the individual points are samples from one of the $K=4$ normal topics, but the mixing proportion was different from both of the normal mixing proportions. We inject 3 anomalous groups into the data set. The OCSMM is trained using the same setting as in the previous experiment. The results are depicted in Figure \ref{fig:synthetic-results}.


\subsection{Noisy Data}
\label{sec:exp-noisy}

As discussed at the end of Section \ref{sec:kernel-dist}, the OCSMM may be adopted to learn from data points whose uncertainties are observed explicitly. To illustrate this claim, we generate samples from the unit circle using $x=\cos\theta+\varepsilon$ and $y=\sin\theta+\varepsilon$ where $\theta\sim (-\pi,\pi]$ and $\varepsilon$ is a zero-mean isotropic Gaussian noise $\mathcal{N}(0,0.05)$. A different point-wise Gaussian noise $\mathcal{N}(0,\omega_i)$ where $\omega_i\in(0.2,0.3)$ is further added to each point to simulate the random measurement corruption. In this experiment, we assume that $\omega_i$ is available during training. This situation is often encountered in many applications such as astronomy and computational biology. Both OCSVM and OCSMM are trained on the corrupted data. As opposed to the OCSVM that considers only the observed data points, the OCSMM also uses $\omega_i$ for every point via the kernel \eqref{eq:analytic-kernel}. Then, we consider a slightly more complicate data generated by $x = r\cdot\cos(\theta)$ and $y = r\cdot\sin(\theta)$ where $r = \sin(4\theta)+2$ and $\theta \in (0,2\pi]$. The data used in both examples are illustrated in Figure \ref{fig:density}.

As illustrated by Figure \ref{fig:density}, the density function estimated by the OCSMM is relatively less susceptible to the additional corruption than that estimated by the OCSVM, and tends to estimate the true density more accurately. This is not surprising because we also take into account an additional information about the uncertainty. However, this experiment suggests that when dealing with uncertain data, it might be beneficial to also estimate the uncertainty, as commonly performed in astronomy, and incorporate it into the model. This scenario has not been fully investigated in AI and machine learning communities. Our framework provides one possible way to deal with such a scenario.

\begin{figure}[t!]
  \centering
  \includegraphics[width=\linewidth]{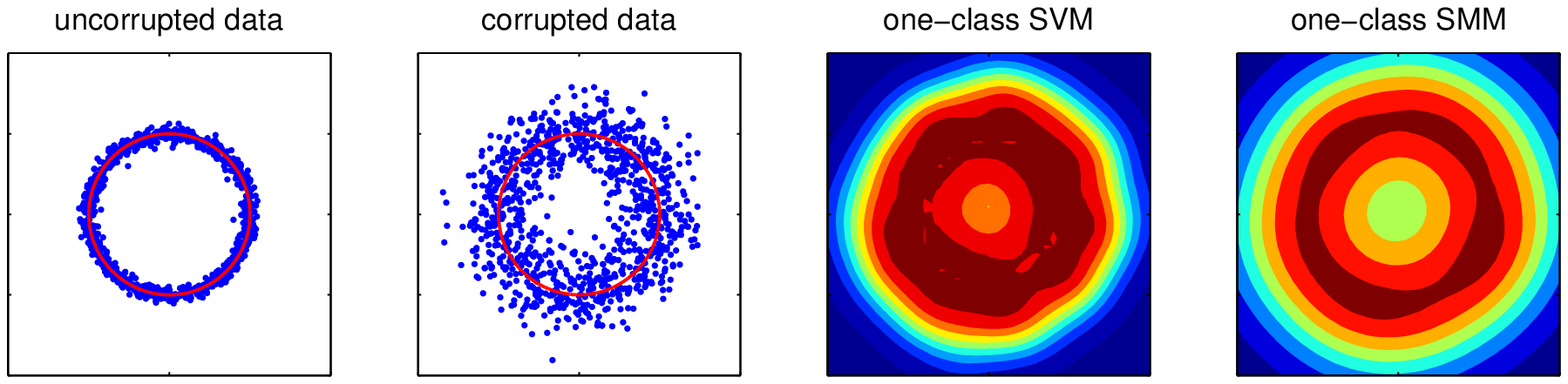} \\
  \includegraphics[width=\linewidth]{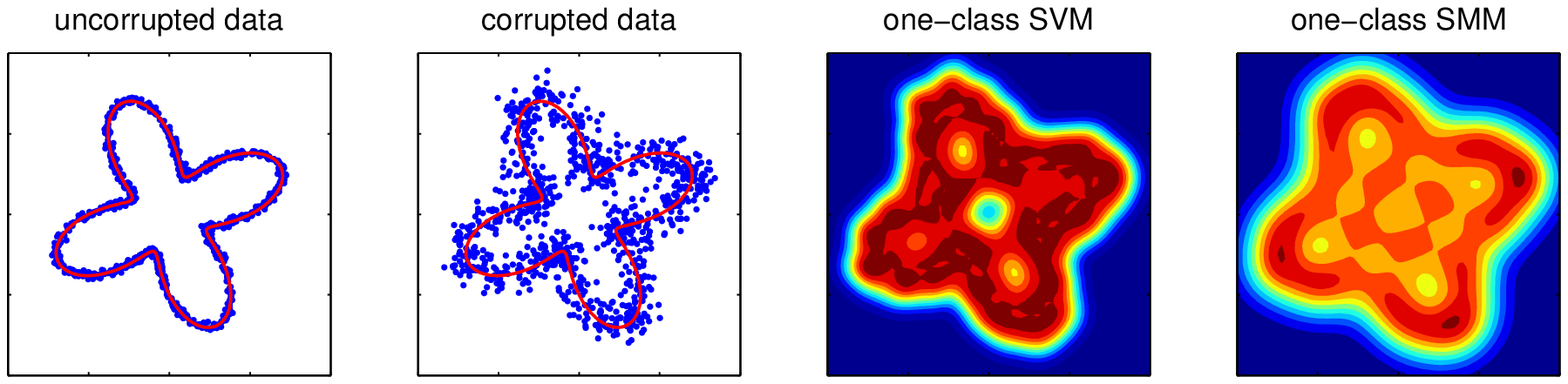}
  \caption{The density functions estimated by the OCSVM and the OCSMM using the corrupted data.} 
  \label{fig:density}
\end{figure}

\subsection{Sloan Digital Sky Survey}

Sloan Digital Sky Survey (SDSS)\footnote{See \href{http://www.sdss.org}{http://www.sdss.org} for the detail of the surveys.} consists of a series of massive spectroscopic surveys of the distant universe, the milky way galaxies, and extrasolar planetary systems. The SDSS datasets contain images and spectra of more than 930,000 galaxies and more than 120,000 quasars.

In this experiment, we are interested in identifying anomalous groups of galaxies, as previously studied in \citet{Poczos11:Divergence} and \citet{Xiong11:HPM,Liang11:FGM}. To replicate the experiments conducted in \citet{Xiong11:HPM}, we use the same dataset which consists of 505 spatial clusters of galaxies. Each of which contains about 10-15 galaxies. The data were preprocessed by PCA to reduce the 1000-dimensional features to 4-dimensional vectors. 

To evaluate the performance of different algorithms to detect group anomaly, we consider artificially random injections. Each anomalous group is constructed by randomly selecting galaxies. There are 50 anomalous groups of galaxies in total. Note that although these groups of galaxies contain usual galaxies, their aggregations are anomalous due to the way the groups are constructed. 

The average precision (AP) and area under the ROC curve (AUC) from 10 random repetitions are shown in Figure \ref{fig:auc-results}. Based on the average precision, KNN-L2, MGM, and OCSMM achieve similar results on this dataset and KNN-Renyi outperforms all other algorithms. On the other hand, the OCSMM and KNN-Renyi achieve highest AUC scores on this dataset. Moreover, it is clear that point anomaly detection using the OCSVM fails to detect group anomalies. 

\begin{figure}[t!]
  \centering
  \includegraphics[width=\linewidth]{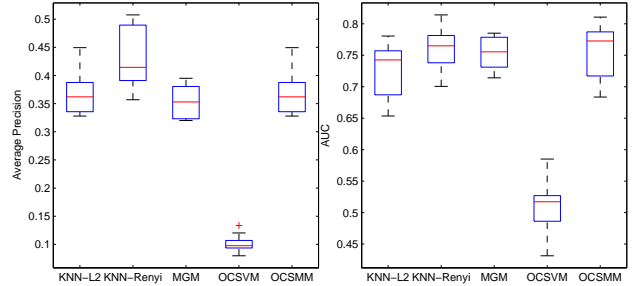}
  \caption{The average precision (AP) and area under the ROC curve (AUC) of different group anomaly detection algorithms on the SDSS dataset.}
  \label{fig:auc-results} 
\end{figure}


\subsection{High Energy Particle Physics}

\begin{figure*}[t] 
  \centering
  \includegraphics[width=\linewidth]{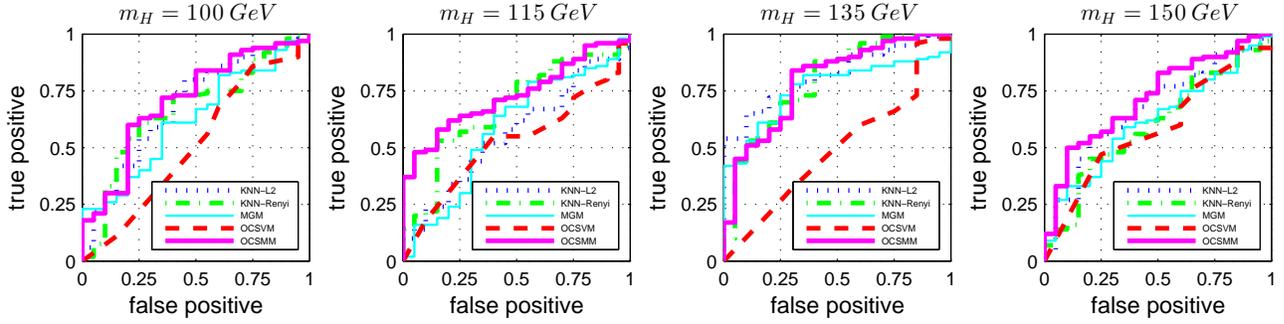} 
  \caption{The ROC of different group anomaly detection algorithms on the Higgs boson datasets with various Higgs masses $m_{H}$. The associated AUC scores for different settings, sorted in the same order appeared in the figure, are (0.6835,0.6655,0.6350,0.5125,\textbf{0.7085}), (0.5645,0.6783,0.5860,0.5263,\textbf{0.7305}), (\textbf{0.8190},0.7925,0.7630,0.4958,0.7950), and (0.6713,0.6027,0.6165,0.5862,\textbf{0.7200}).}
  \label{fig:higgs-roc} 
\end{figure*}

In this section, we demonstrate our group anomaly detection algorithm in high energy particle physics, which is largely the study of fundamental particles, e.g., neutrinos, and their interactions. Essentially, all particles and their dynamics can be described by a quantum field theory called the \emph{Standard Model}. Hence, given massive datasets from high-energy physics experiments, one is interested in discovering deviations from known Standard Model physics.

Searching for the Higgs boson, for example, has recently received much attention in particle physics and machine learning communities (see e.g., \citet{Bhat10:PP, Vatanan12:SSCA} and references therein). A new physical phenomena usually manifest themselves as tiny excesses of certain types of collision events among a vast background of known physics in particle detectors.

Anomalies occur as a cluster among the background data. The background data distribution contaminated by these anomalies will therefore be different from the true background distribution. It is very difficult to detect this difference in general because the contamination can be considerably small. In this experiment, we consider similar condition as in \citet{Vatanan12:SSCA} and generate data using the standard HEP Monte Carlo generators such as PYTHIA\footnote{\href{http://home.thep.lu.se/~torbjorn/Pythia.html}{http://home.thep.lu.se/$\sim$torbjorn/Pythia.html}}. In particular, we consider a Monte Carlo simulated events where the Higgs is produced in association with the $W$ boson and decays into two bottom quarks.

The data vector consists of 5 variables $(p_x,p_y,p_z,e,m)$ corresponding to different characteristics of the topology of a collision event. The variables $p_x,p_y,p_z,e$ represents the momentum four-vector in units of GeV with $c=1$. The variable $m$ is the particle mass in the same unit. The signal looks slightly different for different Higgs masses $m_H$, which is an unknown free parameter in the Standard Model. In this experiment, we consider $m_H=100, 115, 135$, and $150$ GeV. We generate 120 groups of collision events, 100 of which contain only background signals, whereas the rest also contain the Higgs boson collision events. For each group, the number of observable particles ranges from 200 to 500 particles. The goal is to detect the anomalous groups of signals which might contain the Higgs boson without prior knowledge of $m_H$.

Figure \ref{fig:higgs-roc} depicts the ROC of different group anomaly detection algorithms. The OCSMM and KNN-based group anomaly detection algorithms tend to achieve competitive performance and outperform the MGM algorithm. Moreover, it is clear that traditional point anomaly detection algorithm fails to detect high-level anomalous structures.

\section{Conclusions and Discussions} 
\label{sec:conclusions}

To conclude, we propose a simple and efficient algorithm for detecting group anomalies called one-class support measure machine (OCSMM). To handle aggregate behaviors of data points, groups are represented as probability distributions which account for higher-order information arising from those behaviors. The set of distributions are represented as mean functions in the RKHS via the kernel mean embedding. We also extend the relationship between the OCSVM and the KDE to the OCSMM in the context of variable kernel density estimation, bridging the gap between large-margin approach and kernel density estimation. We demonstrate the proposed algorithm on both synthetic and real-world datasets, which achieve competitive results compared to existing group anomaly detection techniques.

It is vital to note the differences between the OCSMM and hierarchical probabilistic models such as MGM and FGM. Firstly, the probabilistic models assume that data are generated according to some parametric distributions, i.e., mixture of Gaussian, whereas the OCSMM is nonparametric in the sense that no assumption is made about the distributions. It is therefore applicable to a wider range of applications. Secondly, the probabilistic models follow a bottom-up approach. That is, detecting group-based anomalies requires point-based anomaly detection. Thus, the performance also depends on how well anomalous points can be detected. Furthermore, it is computational expensive and may not be suitable for large-scale datasets. On the other hand, the OCSMM adopts the top-down approach by detecting the group-based anomalies directly. If one is interested in finding anomalous points, this can be done subsequently in a group-wise manner. As a result, the top-down approach is generally less computational expensive and can be used efficiently for online applications and large-scale datasets.

 
{\small
\bibliographystyle{abbrvnat}
\bibliography{ocsmm}}

\begin{thebibliography}{26}
\providecommand{\natexlab}[1]{#1}
\providecommand{\url}[1]{\texttt{#1}}
\expandafter\ifx\csname urlstyle\endcsname\relax
  \providecommand{\doi}[1]{doi: #1}\else
  \providecommand{\doi}{doi: \begingroup \urlstyle{rm}\Url}\fi

\bibitem[Abramson(1982)]{Abramson82:BV}
I.~S. Abramson.
\newblock On bandwidth variation in kernel estimates-a square root law.
\newblock \emph{The Annals of Statistics}, 10\penalty0 (4):\penalty0
  1217--1223, 1982.

\bibitem[Berlinet and Agnan(2004)]{Berlinet04:RKHS}
A.~Berlinet and T.~C. Agnan.
\newblock \emph{Reproducing Kernel Hilbert Spaces in Probability and
  Statistics}.
\newblock Kluwer Academic Publishers, 2004.

\bibitem[Bhat(2011)]{Bhat10:PP}
P.~C. Bhat.
\newblock {Multivariate Analysis Methods in Particle Physics}.
\newblock \emph{Ann.Rev.Nucl.Part.Sci.}, 61:\penalty0 281--309, 2011.

\bibitem[Blei et~al.(2003)Blei, Ng, and Jordan]{Blei03:LDA}
D.~M. Blei, A.~Y. Ng, and M.~I. Jordan.
\newblock Latent dirichlet allocation.
\newblock \emph{Journal of Machine Learning Research}, 3:\penalty0 993--1022,
  2003.

\bibitem[Bovy et~al.(2011)Bovy, Hennawi, Hogg, Myers, Kirkpatrick, J.Schlegel,
  Ross, Sheldon, McGreer, Schneider, and Weaver]{Bovy11:XDQSO}
J.~Bovy, J.~F. Hennawi, D.~W. Hogg, A.~D. Myers, J.~A. Kirkpatrick,
  D.~J.Schlegel, N.~P. Ross, E.~S. Sheldon, I.~D. McGreer, D.~P. Schneider, and
  B.~A. Weaver.
\newblock Think outside the color box: Probabilistic target selection and the
  sdss-xdqso quasar targeting catalog.
\newblock \emph{The Astrophysical Journal}, 729\penalty0 (2):\penalty0 141,
  2011.

\bibitem[Breiman et~al.(1977)Breiman, Meisel, and Purcell]{Breiman77:VKDE}
L.~Breiman, W.~Meisel, and E.~Purcell.
\newblock Variable kernel estimates of multivariate densities.
\newblock \emph{Technometrics}, 19\penalty0 (2):\penalty0 135--144, 1977.

\bibitem[Chan and Mahoney(2005)]{Chan05:TimeSeries}
P.~K. Chan and M.~V. Mahoney.
\newblock Modeling multiple time series for anomaly detection.
\newblock In \emph{Proceedings of the 5th IEEE International Conference on Data
  Mining (ICDM)}, pages 90--97. IEEE Computer Society, 2005.

\bibitem[Chandola et~al.(2009)Chandola, Banerjee, and
  Kumar]{Chandola09:AnomalySurvey}
V.~Chandola, A.~Banerjee, and V.~Kumar.
\newblock Anomaly detection: A survey.
\newblock \emph{ACM Comput. Surv.}, 41\penalty0 (3):\penalty0 1--58, July 2009.

\bibitem[Das et~al.(2008)Das, Schneider, and Neill]{Das08:Anomaly}
K.~Das, J.~Schneider, and D.~B. Neill.
\newblock Anomaly pattern detection in categorical datasets.
\newblock In \emph{ACM-SIGKDD}, pages 169--176. ACM, 2008.

\bibitem[Fukumizu et~al.(2004)Fukumizu, Bach, and Jordan]{Fukumizu04:RKHS}
K.~Fukumizu, F.~R. Bach, and M.~I. Jordan.
\newblock Dimensionality reduction for supervised learning with reproducing
  kernel hilbert spaces.
\newblock \emph{Journal of Machine Learning Research}, 5:\penalty0 73--99,
  December 2004.

\bibitem[Hoffmann(2007)]{Hoffmann07:KPCA}
H.~Hoffmann.
\newblock Kernel {PCA} for novelty detection.
\newblock \emph{Pattern Recognition}, 40\penalty0 (3):\penalty0 863--874, 2007.

\bibitem[Kirkpatrick et~al.(2011)Kirkpatrick, Schlegel, Ross, Myers, Hennawi,
  S.Sheldon, Schneider, and Weaver]{Kirkpatrick11:Likelihood}
J.~A. Kirkpatrick, D.~J. Schlegel, N.~P. Ross, A.~D. Myers, J.~F. Hennawi,
  E.~S.Sheldon, D.~P. Schneider, and B.~A. Weaver.
\newblock A simple likelihood method for quasar target selection.
\newblock \emph{The Astrophysical Journal}, 743\penalty0 (2):\penalty0 125,
  2011.

\bibitem[Muandet et~al.(2012)Muandet, Fukumizu, Dinuzzo, and
  Sch\"{o}lkopf]{Muandet12:SMM}
K.~Muandet, K.~Fukumizu, F.~Dinuzzo, and B.~Sch\"{o}lkopf.
\newblock Learning from distributions via support measure machines.
\newblock In \emph{Advances in Neural Information Processing Systems (NIPS)},
  pages 10--18. 2012.

\bibitem[P{\'o}czos et~al.(2011)P{\'o}czos, Xiong, and
  Schneider]{Poczos11:Divergence}
B.~P{\'o}czos, L.~Xiong, and J.~G. Schneider.
\newblock Nonparametric divergence estimation with applications to machine
  learning on distributions.
\newblock In \emph{Proceedings of the 27th Conference on Uncertainty in
  Artificial Intelligence (UAI)}, pages 599--608, 2011.

\bibitem[Ross et~al.(2012)Ross, Myers, Sheldon, Yèche, Strauss, Bovy,
  Kirkpatrick, Richards, Éric Aubourg, Blanton, Brandt, Carithers, Croft,
  da~Silva, Dawson, Eisenstein, Hennawi, Ho, Hogg, Lee, Lundgren, McMahon,
  Miralda-Escudé, Palanque-Delabrouille, Pâris, Petitjean, Pieri, Rich, Roe,
  Schiminovich, Schlegel, Schneider, Slosar, Suzuki, Tinker, Weinberg, Weyant,
  White, and Wood-Vasey]{Ross12:DR9QTS}
N.~P. Ross, A.~D. Myers, E.~S. Sheldon, C.~Yèche, M.~A. Strauss, J.~Bovy,
  J.~A. Kirkpatrick, G.~T. Richards, Éric Aubourg, M.~R. Blanton, W.~N.
  Brandt, W.~C. Carithers, R.~A.~C. Croft, R.~da~Silva, K.~Dawson, D.~J.
  Eisenstein, J.~F. Hennawi, S.~Ho, D.~W. Hogg, K.-G. Lee, B.~Lundgren, R.~G.
  McMahon, J.~Miralda-Escudé, N.~Palanque-Delabrouille, I.~Pâris,
  P.~Petitjean, M.~M. Pieri, J.~Rich, N.~A. Roe, D.~Schiminovich, D.~J.
  Schlegel, D.~P. Schneider, A.~Slosar, N.~Suzuki, J.~L. Tinker, D.~H.
  Weinberg, A.~Weyant, M.~White, and W.~M. Wood-Vasey.
\newblock The sdss-iii baryon oscillation spectroscopic survey: Quasar target
  selection for data release nine.
\newblock \emph{The Astrophysical Journal Supplement Series}, 199\penalty0
  (1):\penalty0 3, 2012.

\bibitem[Sch\"{o}lkopf and Smola(2001)]{Scholkopf2001:LKS}
B.~Sch\"{o}lkopf and A.~J. Smola.
\newblock \emph{Learning with Kernels: Support Vector Machines, Regularization,
  Optimization, and Beyond}.
\newblock MIT Press, Cambridge, MA, USA, 2001.
\newblock ISBN 0262194759.

\bibitem[Sch\"{o}lkopf et~al.(2001)Sch\"{o}lkopf, Platt, Shawe-Taylor, Smola,
  and Williamson]{scholkopf01:OCSVM}
B.~Sch\"{o}lkopf, J.~C. Platt, J.~C. Shawe-Taylor, A.~J. Smola, and R.~C.
  Williamson.
\newblock Estimating the support of a high-dimensional distribution.
\newblock \emph{Neural Computation}, 13:\penalty0 1443--1471, 2001.

\bibitem[Smola et~al.(2007)Smola, Gretton, Song, and
  Sch\"{o}lkopf]{Smola07Hilbert}
A.~Smola, A.~Gretton, L.~Song, and B.~Sch\"{o}lkopf.
\newblock A hilbert space embedding for distributions.
\newblock In \emph{Proceedings of the 18th International Conference on
  Algorithmic Learning Theory}, pages 13--31. Springer-Verlag, 2007.

\bibitem[Sriperumbudur et~al.(2010)Sriperumbudur, Gretton, Fukumizu,
  Sch\"{o}lkopf, and Lanckriet]{Sriperumbudur10:Metrics}
B.~K. Sriperumbudur, A.~Gretton, K.~Fukumizu, B.~Sch\"{o}lkopf, and G.~R.~G.
  Lanckriet.
\newblock Hilbert space embeddings and metrics on probability measures.
\newblock \emph{Journal of Machine Learning Research}, 2010.

\bibitem[Tax and Duin(1999)]{Tax99:SVDD}
D.~M.~J. Tax and R.~P.~W. Duin.
\newblock Support vector domain description.
\newblock \emph{Pattern Recognition Letters}, 20:\penalty0 1191--1199, 1999.

\bibitem[Tax and Duin(2004)]{Tax04:SVD}
D.~M.~J. Tax and R.~P.~W. Duin.
\newblock Support vector data description.
\newblock \emph{Machine Learning}, 54\penalty0 (1):\penalty0 45--66, 2004.

\bibitem[Terrell and Scott(1992)]{Terrell1992:VKDE}
G.~R. Terrell and D.~W. Scott.
\newblock Variable kernel density estimation.
\newblock \emph{The Annals of Statistics}, 20\penalty0 (3):\penalty0
  1236--1265, 1992.

\bibitem[Vatanen et~al.(2012)Vatanen, Kuusela, Malmi, Raiko, Aaltonen, and
  Nagai]{Vatanan12:SSCA}
T.~Vatanen, M.~Kuusela, E.~Malmi, T.~Raiko, T.~Aaltonen, and Y.~Nagai.
\newblock Semi-supervised detection of collective anomalies with an application
  in high energy particle physics.
\newblock In \emph{IJCNN}, pages 1--8. IEEE, 2012.

\bibitem[Xiong et~al.(2011{\natexlab{a}})Xiong, Poczos, and
  Schneider]{Liang11:FGM}
L.~Xiong, B.~Poczos, and J.~Schneider.
\newblock Group anomaly detection using flexible genre models.
\newblock In \emph{Proceedings of Advances in Neural Information Processing
  Systems (NIPS)}, 2011{\natexlab{a}}.

\bibitem[Xiong et~al.(2011{\natexlab{b}})Xiong, P{\'o}czos, Schneider,
  Connolly, and VanderPlas]{Xiong11:HPM}
L.~Xiong, B.~P{\'o}czos, J.~G. Schneider, A.~Connolly, and J.~VanderPlas.
\newblock Hierarchical probabilistic models for group anomaly detection.
\newblock \emph{Journal of Machine Learning Research - Proceedings Track},
  15:\penalty0 789--797, 2011{\natexlab{b}}.

\bibitem[Zhao and Saligrama(2009)]{Zhao09:Anomaly}
M.~Zhao and V.~Saligrama.
\newblock Anomaly detection with score functions based on nearest neighbor
  graphs.
\newblock In \emph{Proceedings of Advances in Neural Information Processing
  Systems (NIPS)}, pages 2250--2258, 2009.

\end{thebibliography}

\end{document}